\newtheorem{theorem}{Theorem}[section]
\newcommand{\bftab}{\fontseries{b}\selectfont}
\title{Bayesian Neural Network Ensembles}
\author{
  Tim ~Pearce \\
  Department of Engineering\\
  University of Cambridge\\
  \texttt{tp424@cam.ac.uk} \\
   \And
   Mohamed Zaki \\
  Department of Engineering\\
  University of Cambridge\\
   \texttt{ } \\
      \And
   Andy Neely \\
  Department of Engineering\\
  University of Cambridge\\
   \texttt{ } \\
}
\begin{document}

\maketitle

%


%

\section{Introduction}

Ensembles of neural networks (NNs) have long been used to estimate predictive uncertainty 
\citep{Tibshirani1996, Heskes1996}; a small number of NNs are trained from different initialisations and sometimes on differing versions of the dataset. The variance of the ensemble's predictions is interpreted as its epistemic uncertainty. The appeal of ensembling stems from being a collection of regular NNs - this makes them both scalable and easily implementable.

NN ensembles have continued to achieve strong empirical results in recent years, for example in Lakshminarayanan et al. (\citeyear{Lakshminarayanan2016}), where it was presented as a practical alternative to more costly Bayesian NNs (BNNs). The departure from Bayesian methodology is of concern since the Bayesian framework provides a principled, widely-accepted approach to handling uncertainty. 

Several recent works have explored links between ensembles and Bayesian inference. Variants of an ensembling scheme known to be consistent for Bayesian linear regression have been applied directly to NNs \citep{Lu2017, Osband2017}. In this extended abstract we derive and implement a modified ensembling scheme specifically for NNs, which provides a consistent estimator of the Bayesian posterior in wide NNs - regularising parameters about values drawn from a prior distribution.




 

\section{Randomised MAP Sampling}
\label{sec_rand_map}

Recent work in the Bayesian community, and independently in the reinforcement learning community, has begun to explore an approach to Bayesian inference that will be novel to many readers. Roughly speaking, it exploits the fact that adding a regularisation term to a loss function returns maximum a posteriori (MAP) estimates of parameters with normally distributed priors centred at zero \citep{MacKay1992}. For a regression problem this loss is of the form,

\begin{equation}
\label{eqn_reg_loss_matrix}
Loss_{regularise} =  
\frac{1}{N} \lvert \lvert \mathbf{y} - \hat{\mathbf{y}} \rvert \rvert ^2_2
+ \frac{1}{N} \lvert \lvert \pmb{\Gamma}^{1/2} \pmb{\theta} \rvert \rvert ^2_2,
\end{equation}

where $\mathbf{y}$ is a vector of targets, $\hat{\mathbf{y}}$ is the NN's predictions, $\pmb{\theta}$ is a flattened vector of NN parameters, and $\pmb{\Gamma}$ is a diagonal square regularisation matrix with it's $k^{th}$ diagonal element representing the ratio of data noise variance to prior variance for parameter $\theta_k$. Data noise is assumed normally distributed and homoskedastic in this work.

Injecting noise into this loss, either to targets or regularisation term, and sampling repeatedly (i.e. ensembling), produces a \textit{distribution} of MAP solutions which can approximate the true posterior. This can be an efficient method to sample from high-dimensional posteriors \citep{Bardsley2018}.


Whilst it is straightforward to select the noise distribution that produces exact inference in linear regression models, there is difficulty in transferring this idea to NNs. Directly applying the noise distribution from the linear case to NNs has had some empirical success, despite not reproducing the true posterior \citep{Lu2017, Osband2018}. A more accurate, though more computationally demanding solution, is to wrap the optimisation step into an MCMC procedure \citep{Bardsley2012, Bardsley2018}. We name this family of schemes \textit{randomised MAP sampling}.

\subsection{Normally Distributed Prior and Likelihood}
We consider randomised MAP sampling for the case of multivariate normal prior and (normalised) likelihood, $\mathcal{N}(\pmb{\mu}_{prior},\pmb{\Sigma}_{prior})$, $\mathcal{N}(\pmb{\mu}_{like},\pmb{\Sigma}_{like})$. The posterior, also multivariate normal, is given by Bayes rule, $\mathcal{N}(\pmb{\mu}_{post},\pmb{\Sigma}_{post}) \propto \mathcal{N}(\pmb{\mu}_{prior},\pmb{\Sigma}_{prior}) \cdot \mathcal{N}(\pmb{\mu}_{like},\pmb{\Sigma}_{like})$. The MAP solution is simply $\pmb{\mu}_{post}$, for which a standard result exists,

\begin{equation}
\label{eq_mu_post_text}
\pmb{\mu}_{post} = (\pmb{\Sigma}^{-1}_{like} +\pmb{\Sigma}^{-1}_{prior})^{-1}(\pmb{\Sigma}^{-1}_{like} \pmb{\mu}_{like} + \pmb{\Sigma}^{-1}_{prior} \pmb{\mu}_{prior} ).
\end{equation}

In randomised MAP sampling we are interested in injecting noise so that $ \mathbb{V}ar[\pmb{\mu}_{post} ]= \pmb{\Sigma}_{post}$. Previous work analysing linear regression found that injecting noise into both $\pmb{\mu}_{prior}$ and $\pmb{\mu}_{like}$ can provide a consistent estimator of the true posterior. However, beyond the linear case this approach fails as manipulation of $\pmb{\mu}_{like}$ via targets, $\mathbf{y}$, is complex and creates conflicts amongst parameters.

If instead $\pmb{\mu}_{prior}$ is chosen as the sole noise source, this problem is avoided. In order to inject this noise, let us replace $\pmb{\mu}_{prior}$ with some noisy random variable, $\pmb{\theta}_{0}$, and denote $\pmb{\mu}_{post}^{MAP}(\pmb{\theta}_0)$ the resulting MAP estimate,

\begin{equation}
\label{eq_mu_post_text_anch}
\pmb{\mu}_{post}^{MAP} (\pmb{\theta}_{0}) = (\pmb{\Sigma}^{-1}_{like} +\pmb{\Sigma}^{-1}_{prior})^{-1}(\pmb{\Sigma}^{-1}_{like} \pmb{\mu}_{like} + \pmb{\Sigma}^{-1}_{prior} \pmb{\theta}_{0} ),
\end{equation}

which could be found in practise by minimisation of a slightly modified `anchored' loss function,

\begin{equation}
\label{eqn_anch_loss_matrix}
Loss_{anchor} =  
\frac{1}{N} \lvert \lvert \mathbf{y} - \hat{\mathbf{y}} \rvert \rvert ^2_2
+ \frac{1}{N} \lvert \lvert \pmb{\Gamma}^{1/2} (\pmb{\theta} - \pmb{\theta}_{0}) \rvert \rvert ^2_2.
\end{equation}

Derivation of the noise distribution required for $\pmb{\theta}_{0}$ is found from eq. \ref{eq_mu_post_text_anch}, setting $\mathbb{E}[ \pmb{\mu}_{post}^{MAP}(\pmb{\theta}_0) ]= \pmb{\mu}_{post}$ and $\mathbb{V}ar[ \pmb{\mu}_{post}^{MAP}(\pmb{\theta}_0) ]= \pmb{\Sigma}_{post}$. We find $\pmb{\theta}_{0} \sim \mathcal{N}(\pmb{\mu}_{0},\pmb{\Sigma}_{0})$ with, 

\begin{equation}
\pmb{\mu}_{0} = \pmb{\mu}_{prior},    \: \: \: \:
\pmb{\Sigma}_{0} = \pmb{\Sigma}_{prior} +  \pmb{\Sigma}_{prior}^2 \pmb{\Sigma}_{like}^{-1}.
\label{eq_anch_full}
\end{equation}

\section{Application to NNs}
\label{sec_}

Although the previous section's result is of interest, evaluating eq. \ref{eq_anch_full} requires knowing the likelihood covariance, $\pmb{\Sigma}_{like}$. Estimating this for a NN is far from simple: NNs are unidentifiable, their likelihood variances and correlations vary greatly across parameters, and shift during training. This impasse can be solved in a surprising way. From eq. \ref{eq_anch_full} we see that $diag(\pmb{\Sigma}_{0}) \geq diag(\pmb{\Sigma}_{prior})$. In fact, with increasing NN width, $H$, the term $\pmb{\Sigma}_{prior}^2 \pmb{\Sigma}_{like}^{-1}$ tends to a zero matrix (see appendix for proof). Therefore choosing this lower bound and setting $\pmb{\Sigma}_{0} = \pmb{\Sigma}_{prior}$ is valid for wide NNs.


Stepping back, we note $\pmb{\Sigma}_{0} \approx \pmb{\Sigma}_{prior}$ is only true in the case the posterior is dominated by the prior distribution rather than the likelihood. This occurs in BNNs because the role of priors is slightly abused as a source of regularisation in an over-paramatised model. This observation is significant as it allows us to \textbf{relax our assumption that the prior and likelihood be normally distributed}. Instead, we can say that our method is valid provided the posterior is dominated by the prior.

A surprisingly simple result remains: a wide NN minimising the loss function in eq. \ref{eqn_anch_loss_matrix}, and with $\pmb{\theta}_{0} \sim \mathcal{N}(\pmb{\mu}_{prior},\pmb{\Sigma}_{prior})$, provides a consistent estimator of the posterior.

\subsection{Number in the Ensemble}

If each NN is a single posterior sample, it might be expected that an inordinate number are required to capture the true posterior parameter distributions. But the parameter distributions themselves are of little interest in the context of a NN, it is the predictive distribution that is of sole interest. In this way we \textbf{move from doing inference in parameter space to output space}. Given that each NN provides an independent sample from a posterior predictive distribution, a relatively small number of NNs can give a good approximation. An ensemble size of 5-10 worked well in experiments. This number does not increase with dimensionality of input or output.


\section{Experiments} 
\label{sec_}


In figure \ref{fig_methods} we compare predictive distributions produced by popular Bayesian inference methods in wide (100 node) single-layer NNs, with our method, on a toy regression problem. We used several non-linearities for which analytical GP kernels exist - ReLU, ERF (sigmoidal) and RBF. GP and HMC produce `gold standard' Bayesian inference, and we judge the remaining methods, which are scalable approximations, to them. `VI' denotes mean-field variational inference with Gaussian approximating distributions. `MC Dropout' refers to the popular method proposed in Gal \& Ghahramani (\citeyear{Gal2015}). `Our method' implements the scheme described in this work, with ten NNs per ensemble.

The predictive distributions produced by our method appear good, if slightly wavy, approximations of gold standard inference. However, there does appear to be a tendency to over predict the variance. It captures uncertainty in interpolated regions significantly better than VI and MC Dropout, neither of which account for correlations between parameters.

These plots illustrate one more important point. An example of when ensembling fails to perform Bayesian inference was provided by Gal (\citeyear{Gal2016}) [p. 27]: an ensemble of RBF NNs would output zero with high confidence when predicting far from the training data, and this would not be the case for the equivalent RBF GP which was the squared exponential (SE) kernel. However, the RBF GP is \textit{not} the SE kernel except in the special case of infinite variance priors \citep{Williams1996}. Figure \ref{fig_methods}, bottom left, shows an actual RBF GP with finite variance. In fact the GP outputs zero with high confidence far from the data, as do all methods.

Table \ref{tab_regression} (appendix) gives results of our method on ten standard benchmarking datasets \citep{Hernandez-Lobato2015}. Our method outperforms Deep Ensembles \citep{Lakshminarayanan2016} on datasets where the primary source of uncertainty was epistemic. Code of our implementation is available at \url{https://github.com/TeaPearce}.


\begin{figure*}[!t]
\begin{center}

    \begin{minipage}{0.205\textwidth}
        \centering
        \includegraphics[width=0.93\textwidth]{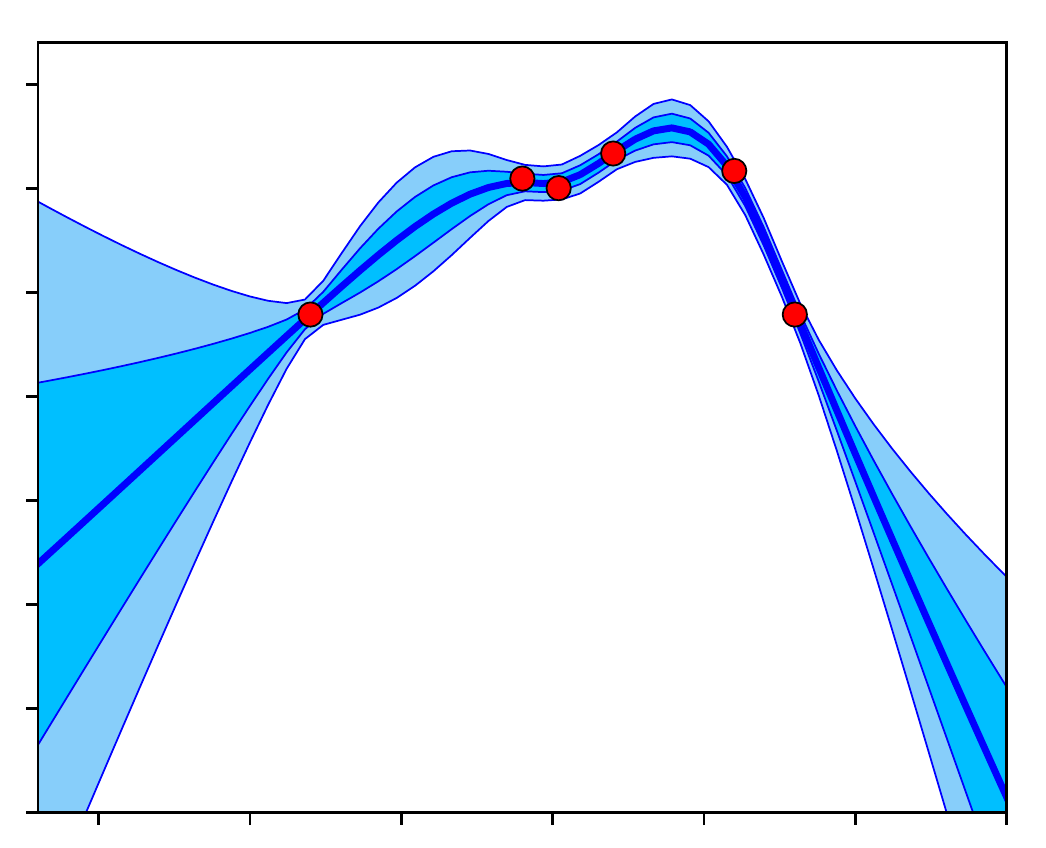}
        \put (-75,63) {\small Ground Truth - GP}
        \put(-83,20){\rotatebox{90}{\small ReLU}}
    \end{minipage}
     \hspace{-0.15in}
    \begin{minipage}{0.205\textwidth}
        \centering
        \includegraphics[width=0.93\textwidth]{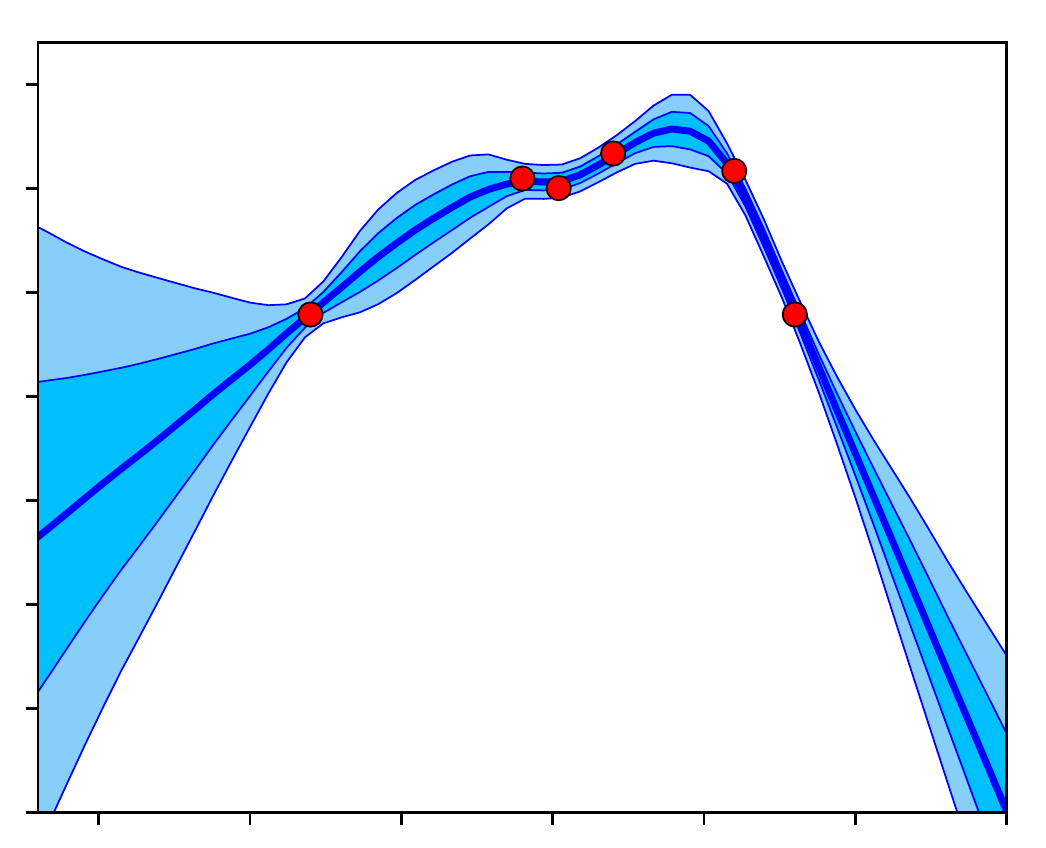}
        \put (-70,63) {\small Hamiltonian MC}
    \end{minipage}
    \hspace{-0.15in}
    \begin{minipage}{0.205\textwidth}
        \centering
        \includegraphics[width=0.93\textwidth]{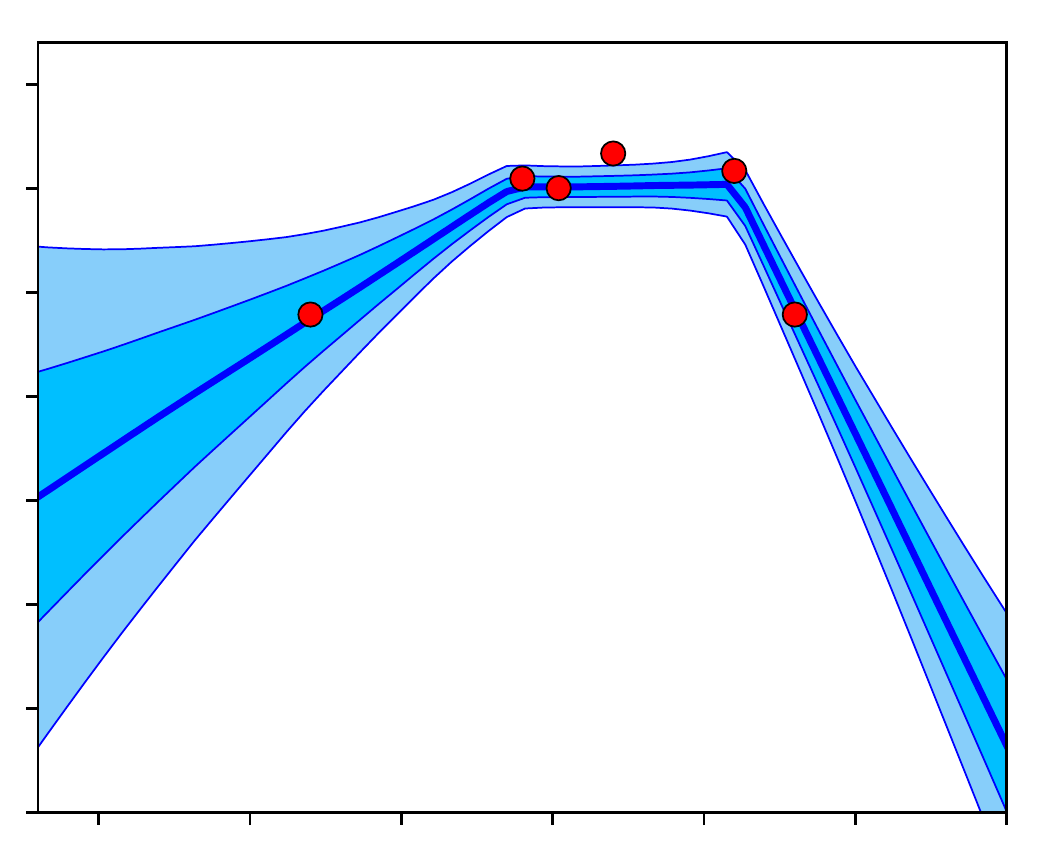}
        \put (-45,63) {\small VI}
    \end{minipage}
    \hspace{-0.15in}
    \begin{minipage}{0.205\textwidth}
        \centering
        \includegraphics[width=0.93\textwidth]{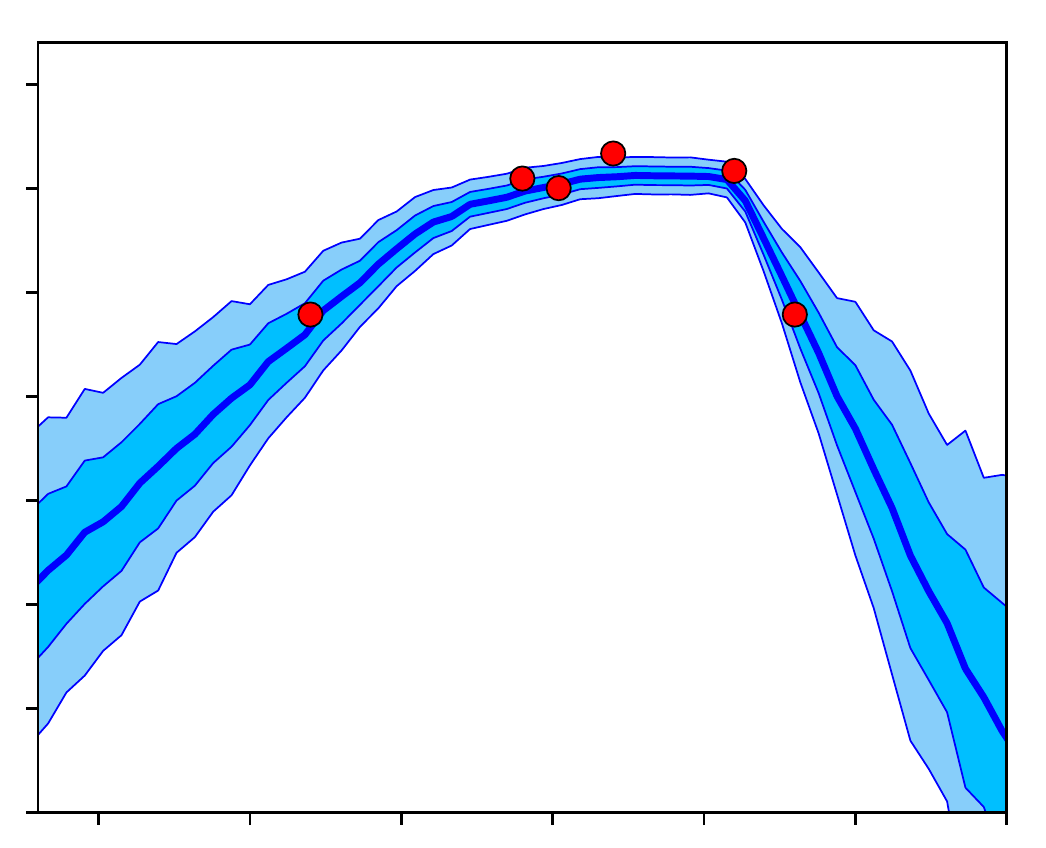}
        \put (-65,63) {\small MC Dropout}
    \end{minipage}
    \hspace{-0.15in}
    \begin{minipage}{0.205\textwidth}
        \centering
        \includegraphics[width=0.93\textwidth]{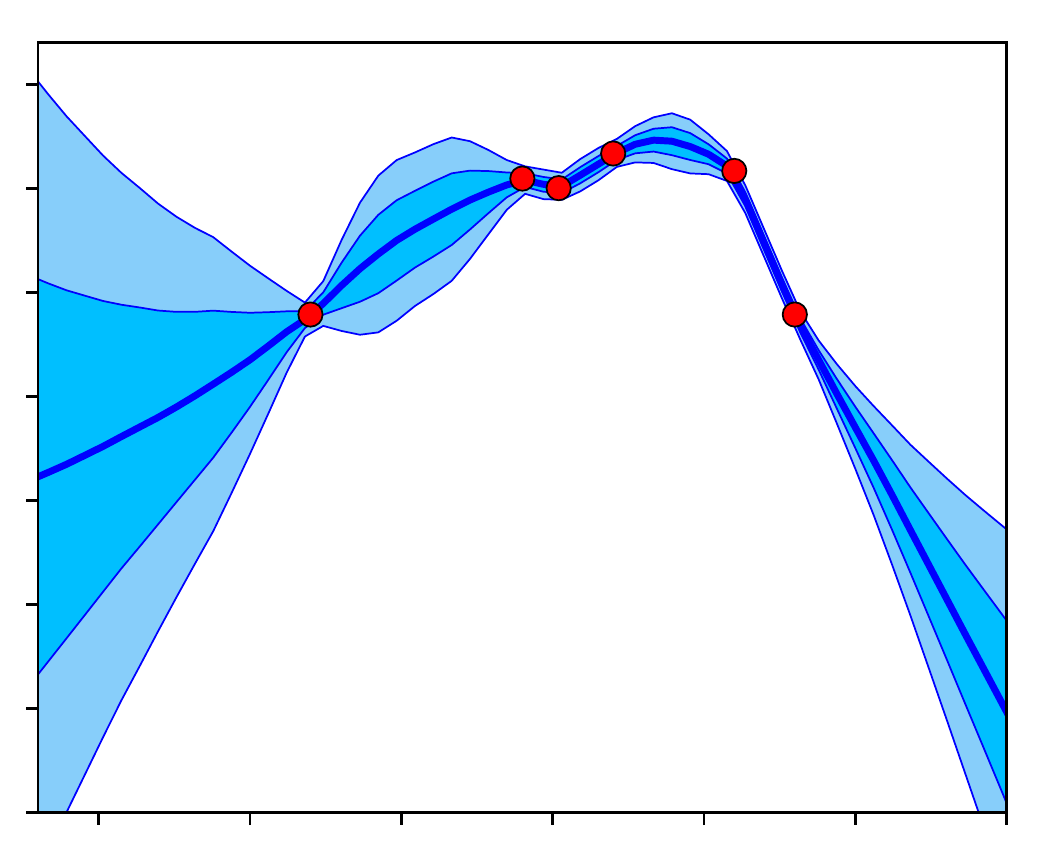}
        \put (-65,63) {\small Our Method}
    \end{minipage}
    
   \vspace{-0.05in}
   
    \begin{minipage}{0.205\textwidth}
        \centering
        \includegraphics[width=0.93\textwidth]{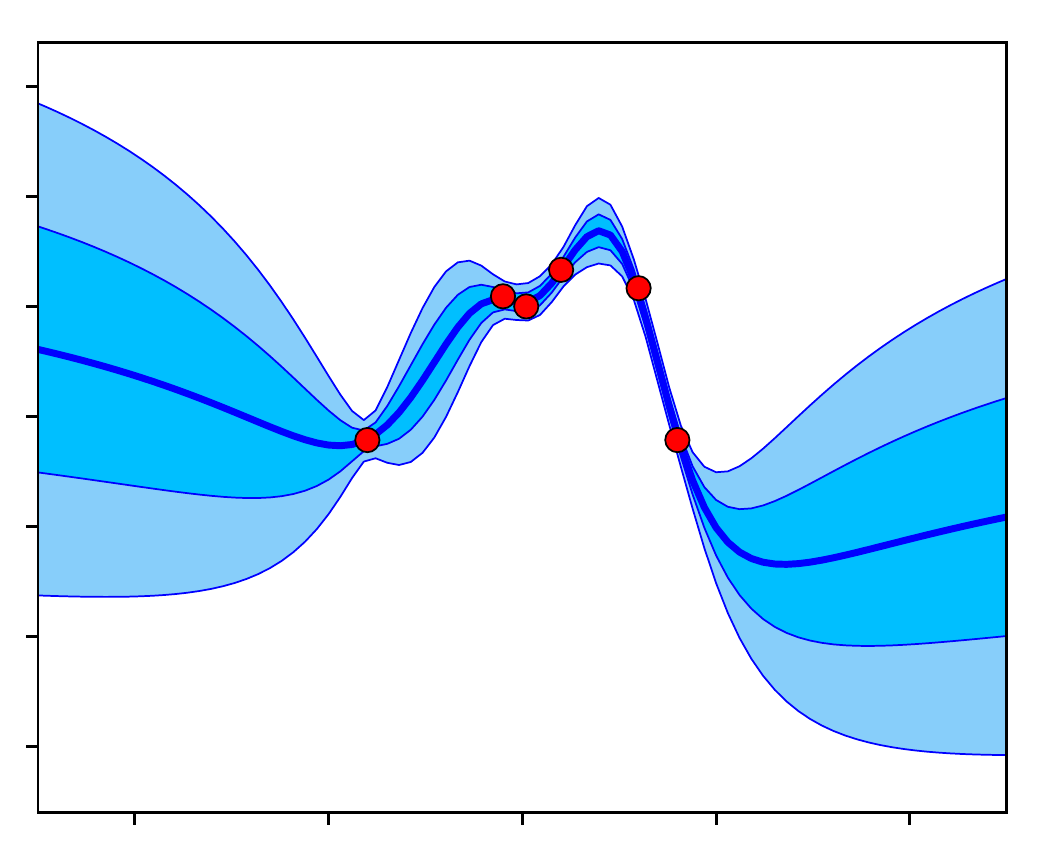}
        \put(-83,12){\rotatebox{90}{\small Sigmoidal}}
    \end{minipage}
    \hspace{-0.15in}
    \begin{minipage}{0.205\textwidth}
        \centering
        \includegraphics[width=0.93\textwidth]{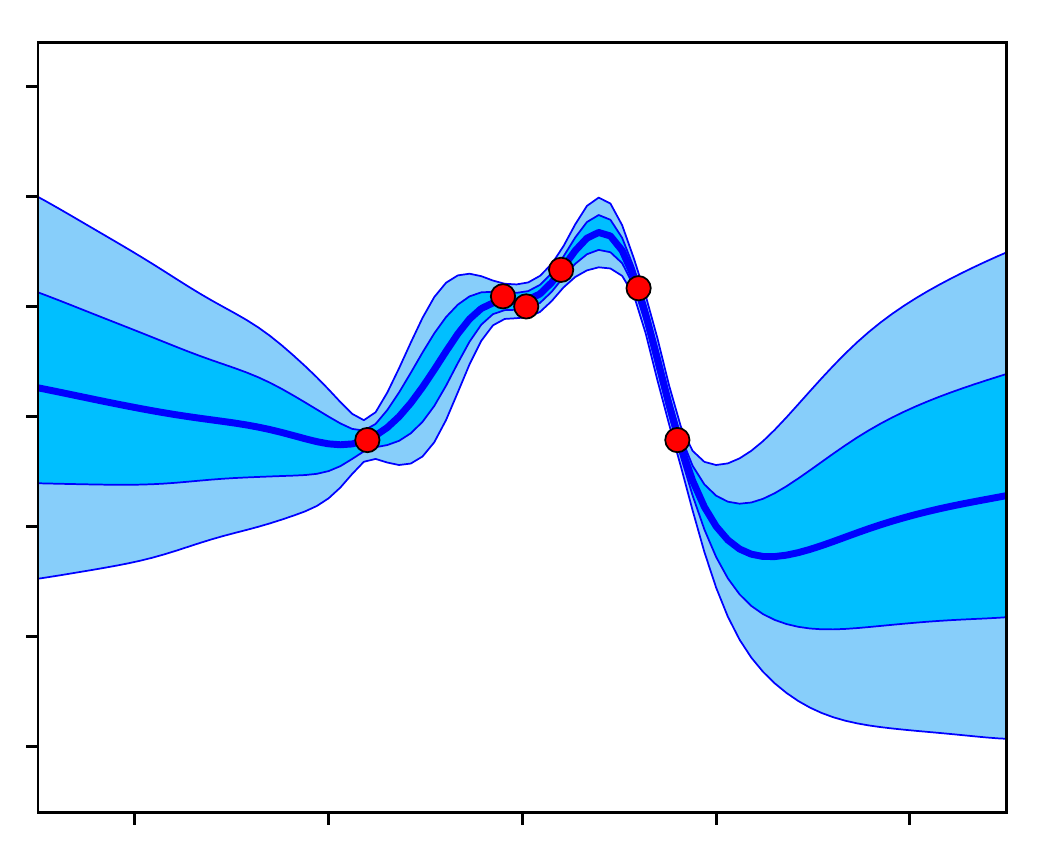}
    \end{minipage}
    \hspace{-0.15in}
    \begin{minipage}{0.205\textwidth}
        \centering
        \includegraphics[width=0.93\textwidth]{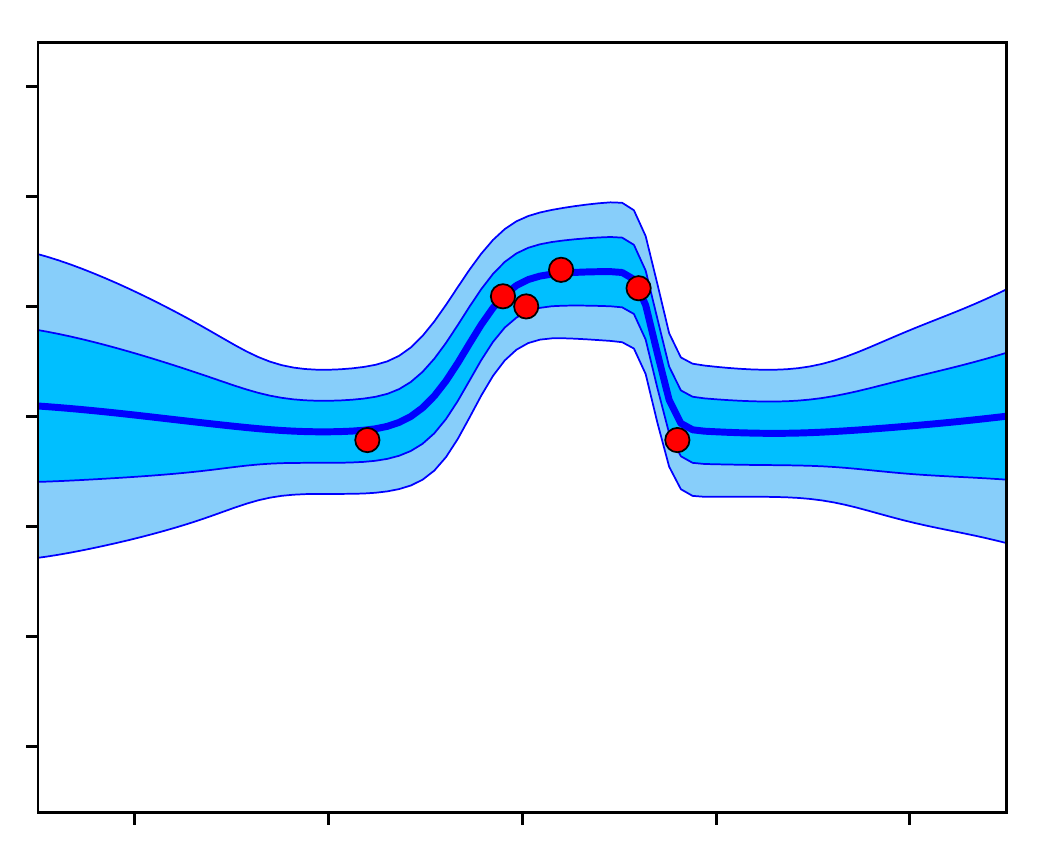}
    \end{minipage}
    \hspace{-0.15in}
    \begin{minipage}{0.205\textwidth}
        \centering
        \includegraphics[width=0.93\textwidth]{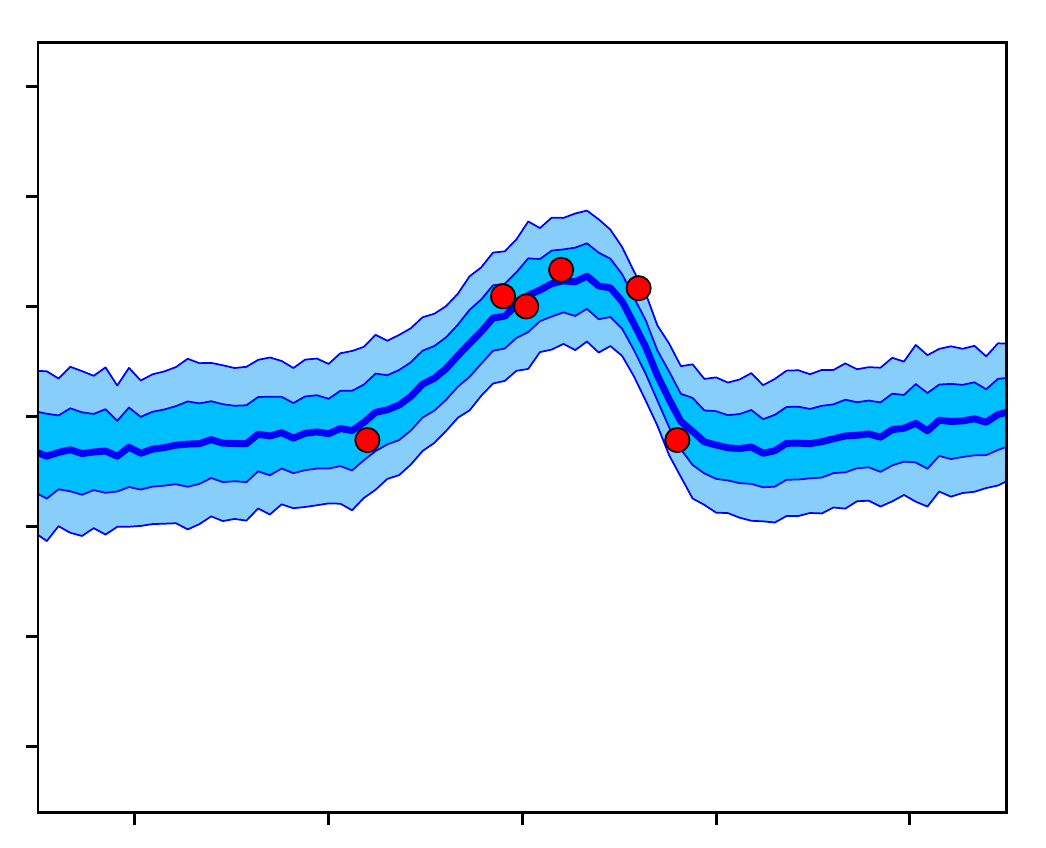}
    \end{minipage}
    \hspace{-0.15in}
    \begin{minipage}{0.205\textwidth}
        \centering
        \includegraphics[width=0.93\textwidth]{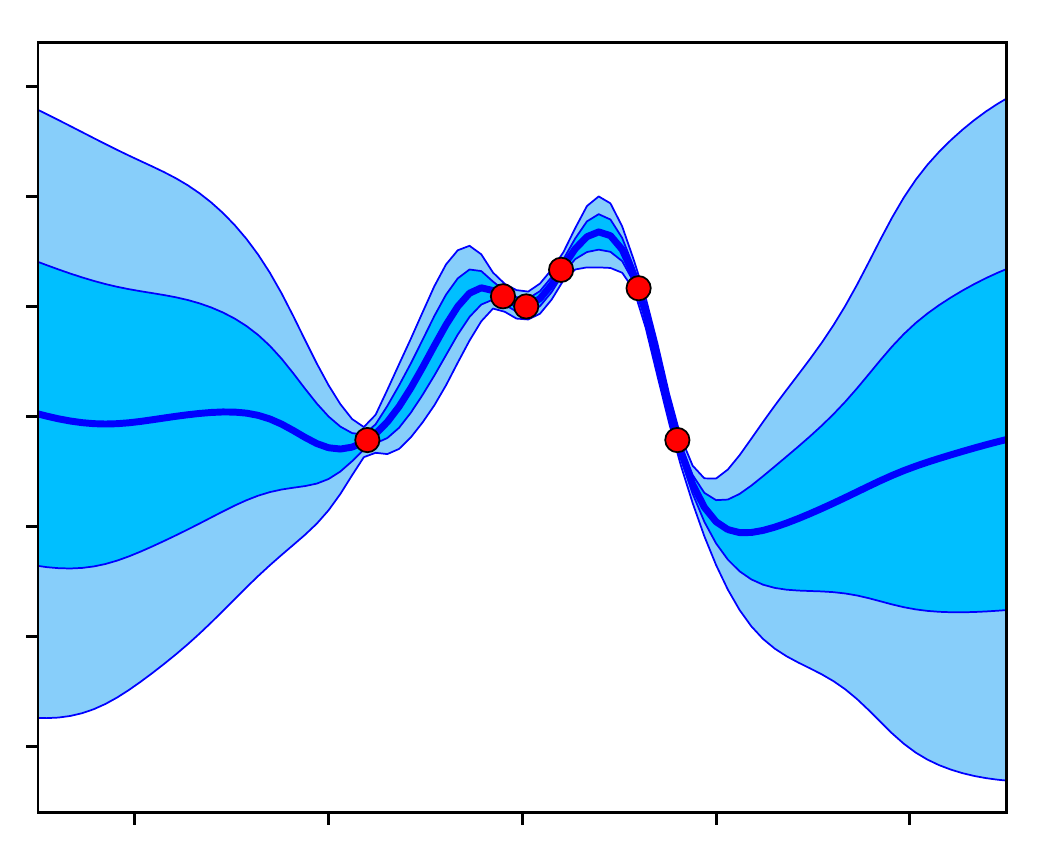}
    \end{minipage}
    
       \vspace{-0.05in}
       
    \begin{minipage}{0.205\textwidth}
        \centering
        \includegraphics[width=0.93\textwidth]{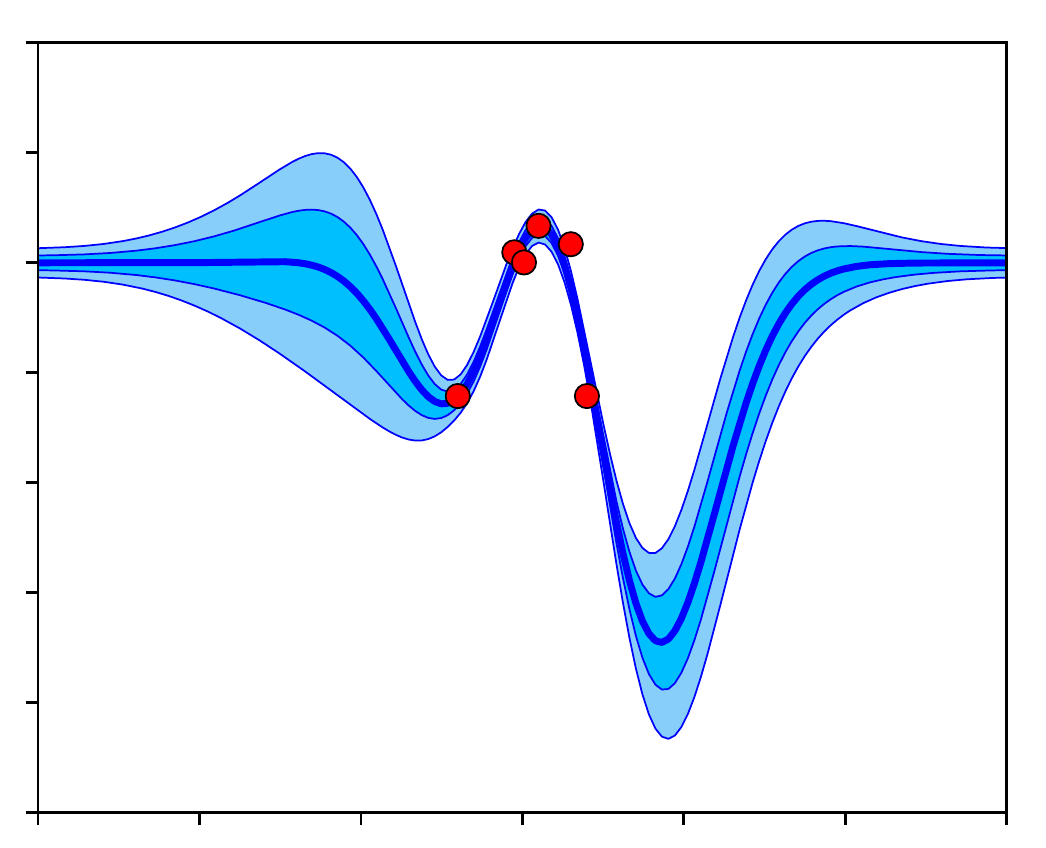}
        \put(-83,25){\rotatebox{90}{\small RBF}}
    \end{minipage}
    \hspace{-0.15in}
    \begin{minipage}{0.205\textwidth}
        \centering
        \includegraphics[width=0.93\textwidth]{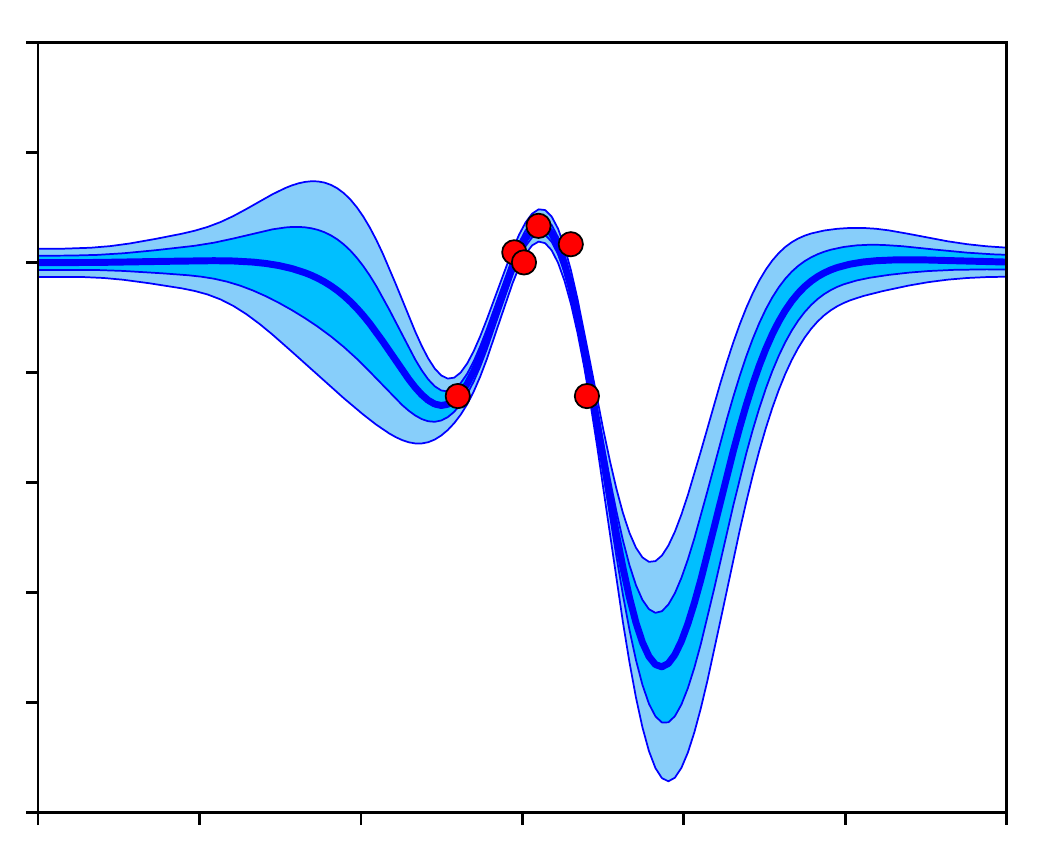}
    \end{minipage}
    \hspace{-0.15in}
    \begin{minipage}{0.205\textwidth}
        \centering
        \includegraphics[width=0.93\textwidth]{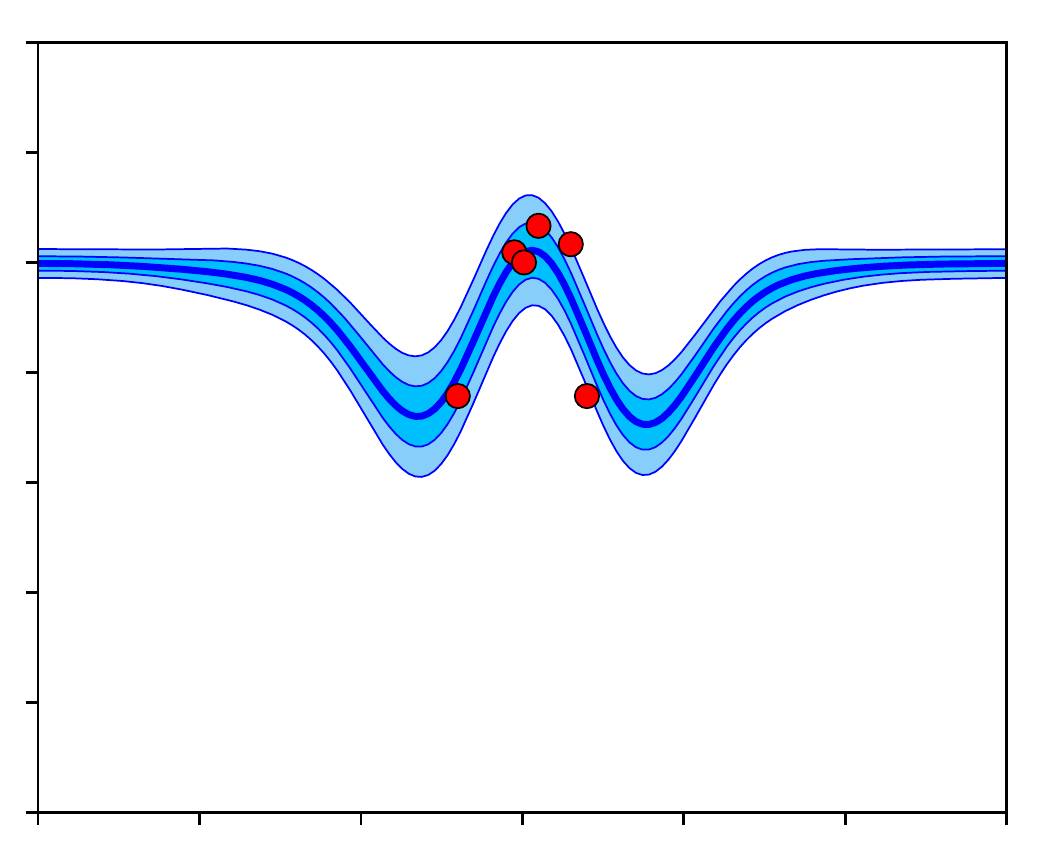}
    \end{minipage}
    \hspace{-0.15in}
    \begin{minipage}{0.205\textwidth}
        \centering
        \includegraphics[width=0.93\textwidth]{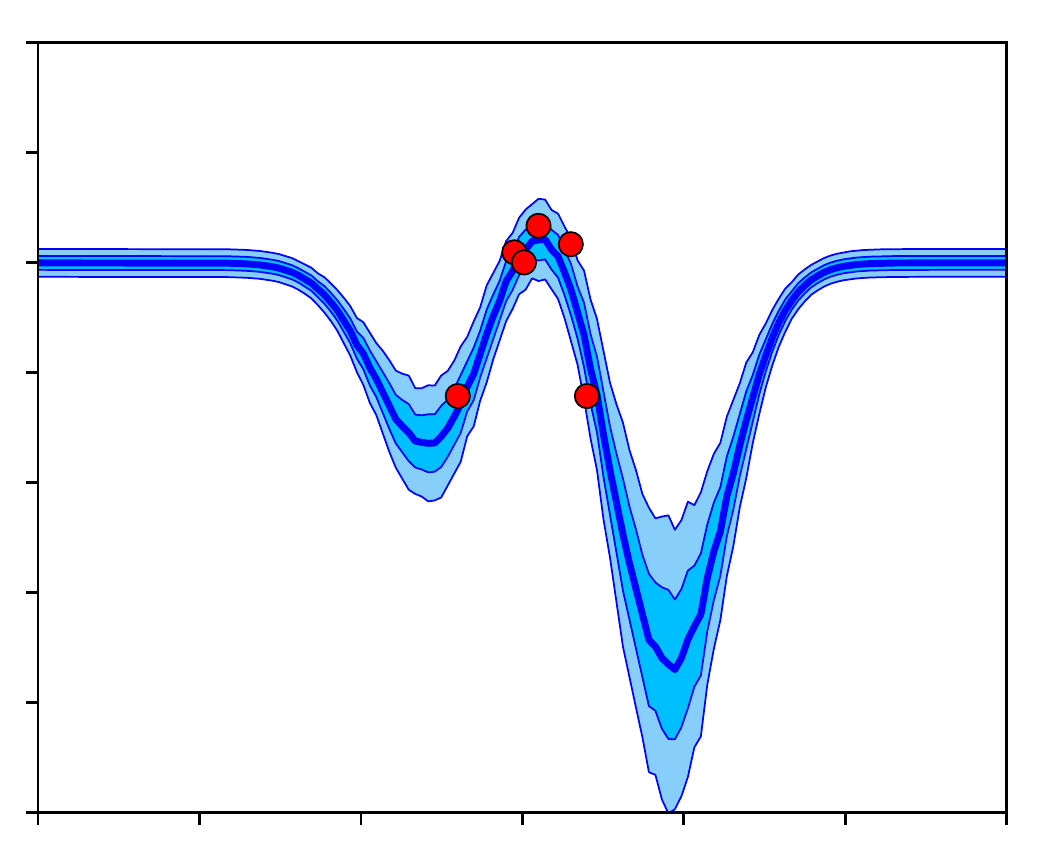}
    \end{minipage}
    \hspace{-0.15in}
    \begin{minipage}{0.205\textwidth}
        \centering
        \includegraphics[width=0.93\textwidth]{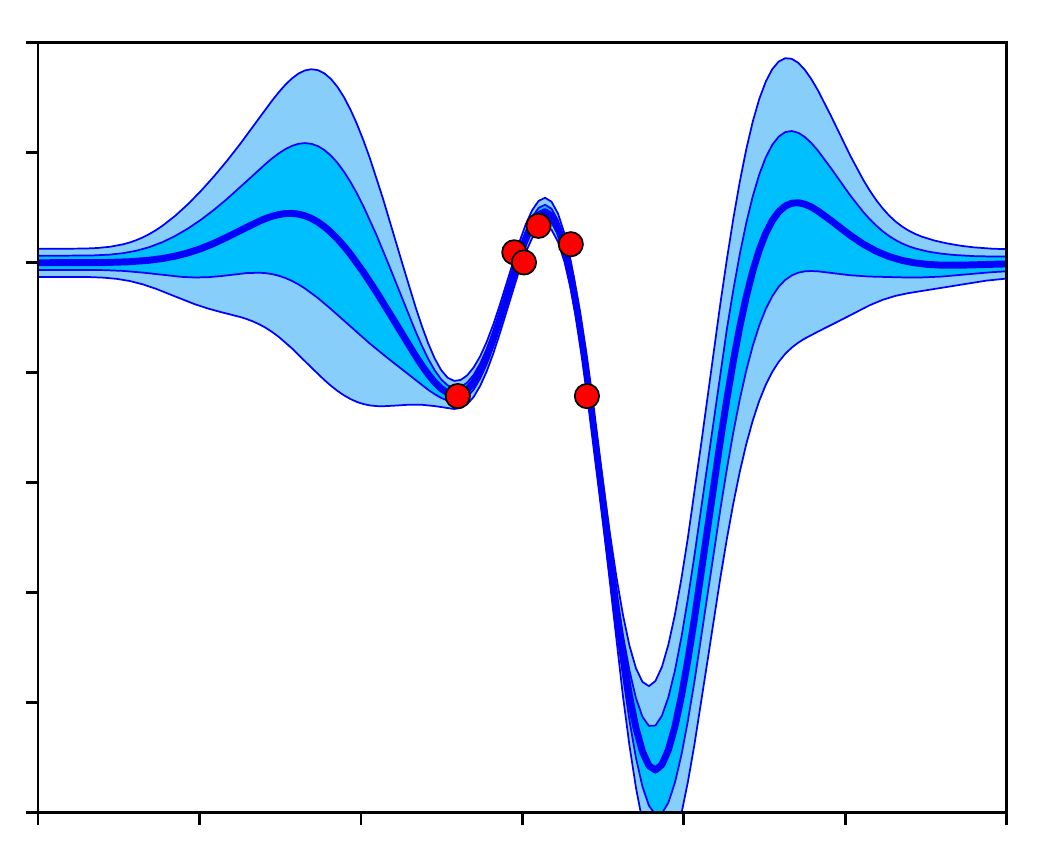}
    \end{minipage}       
       
\caption{Predictive distributions produced by various inference methods (columns) for various activation functions (rows), e.g. bottom right is a RBF NN with inference by our method.}

\label{fig_methods}
\end{center}
\vskip -0.2in
\end{figure*}

\section{Conclusion}

This paper considered a method to produce Bayesian behaviour in NN ensembles by leveraging randomised MAP sampling. It departs only slightly from the usual handling of NNs, with parameters regularised around values drawn from a prior distribution. We showed that for NNs of sufficient width, each produces a sample from the posterior predictive distribution. Qualitative and benchmarking experiments were encouraging. Our ongoing work considers extending the presented theory to classification tasks as well as other architectures such as convolutional NNs.

\subsubsection*{Acknowledgments}

The authors thank EPSRC for funding (EP/N509620/1), the Alan Turing Institute for accommodating the lead author during his work (TU/D/000016), and Microsoft for Azure credits. Personal thanks to Nicolas Anastassacos, Ayman Boustati and Ian Osband.

\small
\bibliography{library} 

\begin{thebibliography}{}

\bibitem[Bardsley, 2012]{Bardsley2012}
Bardsley, J.~M. (2012).
\newblock {MCMC-based image reconstruction with uncertainty quantification}.
\newblock {\em SIAM Journal on Scientific Computing}, 34(3):1316--1332.

\bibitem[Bardsley et~al., 2014]{Bardsley2018}
Bardsley, J.~M., Solonen, A., Haario, H., and Laine, M. (2014).
\newblock {Randomize-Then-Optimize: A Method for Sampling from Posterior
  Distributions in Nonlinear Inverse Problems}.
\newblock {\em SIAM Journal on Scientific Computing}, 36(4).

\bibitem[Cheng et~al., 2018]{Cheng2018}
Cheng, X., Khomtchouk, B., Matloff, N., and Mohanty, P. (2018).
\newblock {Polynomial Regression As an Alternative to Neural Nets}.
\newblock pages 1--28.

\bibitem[Gal, 2016]{Gal2016}
Gal, Y. (2016).
\newblock {\em {Uncertainty in Deep Learning}}.
\newblock PhD thesis.

\bibitem[Gal and Ghahramani, 2015]{Gal2015}
Gal, Y. and Ghahramani, Z. (2015).
\newblock {Dropout as a Bayesian Approximation: Representing Model Uncertainty
  in Deep Learning}.
\newblock In {\em Proceedings of the 33rd International Conference on Machine
  Learning}.

\bibitem[Hern{\'{a}}ndez-Lobato and Adams, 2015]{Hernandez-Lobato2015}
Hern{\'{a}}ndez-Lobato, J.~M. and Adams, R.~P. (2015).
\newblock {Probabilistic Backpropagation for Scalable Learning of Bayesian
  Neural Networks}.
\newblock In {\em Proceedings of the 32nd International Conference on Machine
  Learning}.

\bibitem[Heskes, 1996]{Heskes1996}
Heskes, T. (1996).
\newblock {Practical confidence and prediction intervals}.
\newblock In {\em Advances in Neural Information Processing Systems 9}.

\bibitem[Lakshminarayanan et~al., 2017]{Lakshminarayanan2016}
Lakshminarayanan, B., Pritzel, A., and Blundell, C. (2017).
\newblock {Simple and Scalable Predictive Uncertainty Estimation using Deep
  Ensembles}.
\newblock In {\em 31st Conference on Neural Information Processing Systems}.

\bibitem[Lu and {Van Roy}, 2017]{Lu2017}
Lu, X. and {Van Roy}, B. (2017).
\newblock {Ensemble Sampling}.
\newblock In {\em 31st Conference on Neural Information Processing Systems}.

\bibitem[MacKay, 1992]{MacKay1992}
MacKay, D. J.~C. (1992).
\newblock {A Practical Bayesian Framework for Backpropagation Networks}.
\newblock {\em Neural Computation}, 4(3):448--472.

\bibitem[Neal, 1997]{Neal1997}
Neal, R.~M. (1997).
\newblock {\em {Bayesian Learning for Neural Networks}}.
\newblock PhD thesis.

\bibitem[Osband et~al., 2018]{Osband2018}
Osband, I., Aslanides, J., and Cassirer, A. (2018).
\newblock {Randomized Prior Functions for Deep Reinforcement Learning}.
\newblock In {\em 32nd Conference on Neural Information Processing Systems
  (NIPS 2018)}.

\bibitem[Osband et~al., 2017]{Osband2017}
Osband, I., Russo, D., Wen, Z., and {Van Roy}, B. (2017).
\newblock {Deep Exploration via Randomized Value Functions}.

\bibitem[Tibshirani, 1996]{Tibshirani1996}
Tibshirani, R. (1996).
\newblock {A Comparison of Some Error Estimates for Neural Network Models}.
\newblock {\em Neural Computation}, 8:152--163.

\bibitem[Williams, 1996]{Williams1996}
Williams, C. K.~I. (1996).
\newblock {Computing with infinite networks}.
\newblock In {\em Advances in Neural Information Processing Systems 9}.

\end{thebibliography}
\bibliographystyle{apalike}

\normalsize  
\newpage
\begin{appendices}
\section{Appendix}

\subsection{Proofs}

\begin{theorem}
$\pmb{\Sigma}_{prior}^2 \pmb{\Sigma}_{like}^{-1}$ tends to a zero matrix with increasing $H$.
\end{theorem}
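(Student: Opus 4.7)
The strategy is to combine two width-dependent scalings: the Neal-style rescaling of the prior variance that keeps network outputs bounded as $H$ grows, and a matching entrywise bound on the Fisher information, which equals $\pmb{\Sigma}_{like}^{-1}$ under the Gaussian likelihood assumed in section~\ref{sec_rand_map}. First I would fix the standard NN prior convention, under which a weight fed by $H$ units has variance $\tilde\sigma^2/H$, so $\pmb{\Sigma}_{prior}$ (diagonal under independent priors) has eigenvalues of order $1/H$ on the output-layer block and of order $1$ on the input-layer block.

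Second I would write $\pmb{\Sigma}_{like}^{-1} = J^\top J / \sigma_{\epsilon}^2$, where $J$ is the $N \times P$ Jacobian of predictions with respect to the parameter vector, evaluated at a typical posterior draw. For an output-layer weight, $\partial \hat y/\partial w^{(2)}_h$ is a hidden activation of order one; for an input-layer weight, the derivative carries the extra factor $w^{(2)}_h = O(1/\sqrt{H})$. A Cauchy--Schwarz bound then controls each entry of $\pmb{\Sigma}_{like}^{-1}$ by the geometric mean of the relevant diagonals: $O(N)$ when the row indexes an output weight, and at most $O(N/\sqrt{H})$ when it indexes an input weight.

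Third, because $\pmb{\Sigma}_{prior}$ is diagonal, $(\pmb{\Sigma}_{prior}^2 \pmb{\Sigma}_{like}^{-1})_{ij} = (\pmb{\Sigma}_{prior})_{ii}^2 (\pmb{\Sigma}_{like}^{-1})_{ij}$, so left multiplication by $\pmb{\Sigma}_{prior}^2$ acts as a pure row rescaling. Combining the two bounds, entries on output-weight rows are $O(N/H^2)$, while entries on input-weight rows are at most $O(N/\sqrt{H})$. With $N$ fixed, every entry of $\pmb{\Sigma}_{prior}^2 \pmb{\Sigma}_{like}^{-1}$ tends to zero as $H \to \infty$, so the matrix converges entrywise to zero, as claimed.

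The main obstacle I anticipate is the off-diagonal structure of the Fisher information. Wide-NN unidentifiability makes $J^\top J$ ill-conditioned and gives $\pmb{\Sigma}_{like}^{-1}$ large off-diagonal entries, so a naive operator-norm bound on $\pmb{\Sigma}_{like}^{-1}$ will not succeed. The sketch above sidesteps this by leveraging the diagonality of $\pmb{\Sigma}_{prior}$ and reducing everything to an entrywise Cauchy--Schwarz bound on $\pmb{\Sigma}_{like}^{-1}$. The remaining subtle point is verifying that the Jacobian evaluated at a MAP draw (rather than at a fresh prior sample) still exhibits the $O(1/\sqrt{H})$ scaling of the output weights; this is inherited from the anchored loss (eq.~\ref{eqn_anch_loss_matrix}), which regularises parameters towards a prior-scale random centre $\pmb{\theta}_{0}$.
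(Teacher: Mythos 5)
Your argument reaches the right conclusion but takes a genuinely different, and more quantitative, route than the paper. The paper's proof is a two-pronged heuristic: (i) Neal's $1/H$ prior scaling gives $\pmb{\Sigma}_{prior}^2\pmb{\Sigma}_{like}^{-1}\propto \frac{1}{H^2}\pmb{\Sigma}_{like}^{-1}$, and (ii) widening the network induces multicollinearity, which ``increases the magnitude of $\pmb{\Sigma}_{like}$'' and hence shrinks $\pmb{\Sigma}_{like}^{-1}$. You never reason about $\pmb{\Sigma}_{like}$ itself: you identify $\pmb{\Sigma}_{like}^{-1}$ with the Gram matrix $J^\top J/\sigma_{\epsilon}^2$ of the prediction Jacobian, bound its entries by Cauchy--Schwarz, split the parameters into output-layer rows (prior variance $O(1/H)$, Jacobian entries $O(1)$) and input-layer rows (prior variance $O(1)$, Jacobian entries $O(1/\sqrt{H})$), and use the diagonality of $\pmb{\Sigma}_{prior}$ to reduce the product to a row rescaling. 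This buys you two things the paper's argument lacks: it actually covers the input-layer weights, for which the blanket claim $\pmb{\Sigma}_{prior}\propto 1/H$ is false (Neal's scaling only shrinks weights fed by $H$ units, and the paper's $1/H^2$ factor applies only to the output-layer block), and it sidesteps the paper's weakest step --- ``$\pmb{\Sigma}_{like}$ grows, hence $\pmb{\Sigma}_{like}^{-1}$ shrinks'' is not a valid matrix implication, especially for the near-singular Fisher matrices that multicollinearity produces --- precisely because you bound the precision matrix without inverting anything. What the paper's version buys is brevity and an appeal to familiar facts; yours is closer to an actual proof. The one delicate point, which you rightly flag and which both arguments implicitly rely on, is that the output weights at the expansion point of the Gaussian likelihood approximation retain their $O(1/\sqrt{H})$ prior scale; note also that, as in the paper, ``tends to a zero matrix'' can only sensibly mean entrywise convergence, since the matrix dimension itself grows with $H$.
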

 
\begin{proof}
First we consider priors, $\pmb{\Sigma}_{prior}$. It is usual to scale prior covariance in BNNs according to $1/H$ \citep{Neal1997}. This means the term of interest, $\pmb{\Sigma}_{prior}^2 \pmb{\Sigma}_{like}^{-1} \propto \frac{1}{H^2} \pmb{\Sigma}_{like}^{-1}$, which clearly decreases with $H$. 

Secondly, increasing $H$ creates more parameters and hence a higher probability of strong correlations amongst them - a phenomenon known as multicollinearity. This has the effect of increasing the magnitude of $\pmb{\Sigma}_{like}$ (see also Cheng et al. (\citeyear{Cheng2018})). Hence $\pmb{\Sigma}_{like}^{-1}$ decreases. 

Both these results suggest, $\lim_{H\to \infty} \pmb{\Sigma}_{prior}^2 \pmb{\Sigma}_{like}^{-1} \to 0$.
\end{proof}

\subsection{Benchmark Results}
\begin{table}[h]%
\caption{Regression benchmark results for a Bayesian ensemble of five NNs}
\begin{center}
\resizebox{1.\columnwidth}{!}{
\begin{tabular}{ l rrr  rrr  rrr}

\Xhline{3\arrayrulewidth}

\multicolumn{1}{c}{}  & \multicolumn{1}{c}{}  & \multicolumn{1}{c}{}  & \multicolumn{1}{c}{}  & \multicolumn{3}{c}{\textbf{RMSE}} & \multicolumn{3}{c}{\textbf{NLL}} \\

\multicolumn{1}{c}{}  & \multicolumn{1}{c}{}  & \multicolumn{1}{c}{}  & \multicolumn{1}{c}{}  &  \multicolumn{1}{c}{Deep Ens.} &  \multicolumn{1}{c}{Bay. Ens.} &  \multicolumn{1}{c}{GP$^1$} &  \multicolumn{1}{c}{Deep Ens.}  &  \multicolumn{1}{c}{Bay. Ens.} &  \multicolumn{1}{c}{GP$^1$} \\ 

\multicolumn{1}{c}{}  & \multicolumn{1}{r}{$N$}  & \multicolumn{1}{r}{$D$}  & \multicolumn{1}{r}{$\hat{\sigma}^2_{\epsilon}$} &  \multicolumn{1}{c}{\small{\textit{State-Of-Art}}}  &  \multicolumn{1}{c}{\small{\textit{Our Method}}} &  \multicolumn{1}{c}{\small{\textit{Gold Standard}}}  &  \multicolumn{1}{c}{\small{\textit{State-Of-Art}}}  &  \multicolumn{1}{c}{\small{\textit{Our Method}}} &  \multicolumn{1}{c}{\small{\textit{Gold Standard}}}  \\ 

\hline 
\multicolumn{10}{c}{High Epistemic Uncertainty}\\
Energy & 768 & 8 & 1e-7& {2.09 $\pm$ 0.29}  & \bftab{0.35 $\pm$ 0.01} & {0.60 $\pm$ 0.02}& {1.38 $\pm$ 0.22} & \bftab{0.96 $\pm$ 0.13} & {0.86 $\pm$ 0.02}  \\
Naval & 11,934 & 16 & 1e-7& \bftab{0.00 $\pm$ 0.00} & \bftab{0.00 $\pm$ 0.00} & {0.00 $\pm$ 0.00}& {-5.63 $\pm$ 0.05}  & \bftab{-7.17 $\pm$ 0.03} & {-10.05 $\pm$ 0.02}  \\
Yacht & 308 & 6 & 1e-7 & {1.58 $\pm$ 0.48} & \bftab{0.57 $\pm$ 0.05} & {0.60 $\pm$ 0.08}& {1.18 $\pm$ 0.21}  & \bftab{0.37 $\pm$ 0.08} & {0.49 $\pm$ 0.07}  \\
\hline 
\multicolumn{10}{c}{Medium Epistemic \& Aleatoric Uncertainty}\\
Kin8nm & 8,192 & 8 & 0.02& {0.09 $\pm$ 0.00}  & \bftab{0.07 $\pm$ 0.00} & {0.07 $\pm$ 0.00}& {-1.20 $\pm$ 0.02}  & \bftab{-1.09 $\pm$ 0.01} & {-1.22 $\pm$ 0.01}  \\
Power & 9,568 & 4 & 0.05& {4.11 $\pm$ 0.17} & \bftab{4.07 $\pm$ 0.04} & {3.97 $\pm$ 0.04}& \bftab{2.79 $\pm$ 0.04} & {2.83 $\pm$ 0.01} & {2.80 $\pm$ 0.01}  \\
Concrete & 1,030 & 8 & 0.05& {6.03 $\pm$ 0.58} & \bftab{4.87 $\pm$ 0.11} & {4.88 $\pm$ 0.13}& {3.06 $\pm$ 0.18} & \bftab{2.97 $\pm$ 0.02} & {2.96 $\pm$ 0.02}  \\
Boston & 506 & 13 & 0.08& {3.28 $\pm$ 1.00} & \bftab{3.09 $\pm$ 0.17} & {2.86 $\pm$ 0.16}& \bftab{2.41 $\pm$ 0.25} & {2.52 $\pm$ 0.05} & {2.45 $\pm$ 0.05}  \\
\hline 
\multicolumn{10}{c}{High Aleatoric Uncertainty}\\
Protein & 45,730 & 9 & 0.5 & {4.71 $\pm$ 0.06}  & \bftab{4.36 $\pm$ 0.02} & {*4.34 $\pm$ 0.02}& \bftab{2.83 $\pm$ 0.02}  & {2.89 $\pm$ 0.01} & {*2.88 $\pm$ 0.00}  \\
Wine & 1,599 & 11 & 0.5& \bftab{0.64 $\pm$ 0.04}  & \bftab{0.63 $\pm$ 0.01} & {0.61 $\pm$ 0.01}& \bftab{0.94 $\pm$ 0.12}  & \bftab{0.95 $\pm$ 0.01} & {0.92 $\pm$ 0.01}  \\
Song Year & 515,345 & 90 & 0.7 & {8.89 $\pm$ NA }   & \bftab{8.82 $\pm$ NA } & {**9.01 $\pm$ NA }& \bftab{3.35 $\pm$ NA } & {3.60 $\pm$ NA } & {**3.62 $\pm$ NA }  \\

\Xhline{3\arrayrulewidth} 

\end{tabular}
}
\end{center}

\tiny{$^1$ For reference only (not a scalable method).} \tiny{* Trained on $10,000$ rows of data.} \tiny{** Trained on $20,000$ rows of data, tested on $5,000$ data points.}
\label{tab_regression}
\end{table}

\end{appendices}

\end{document}